\pdfoutput=1

\documentclass[11pt]{article}

\usepackage[final]{acl}
\usepackage{amsmath}
\usepackage{float}
\usepackage{fullpage}
\usepackage{times}
\usepackage{booktabs}
\usepackage{fancyhdr,graphicx,amsmath,amssymb}
\usepackage{algorithm}
\usepackage{multirow}
\usepackage{algorithmic}
\usepackage{bm}
\usepackage{times}
\usepackage{latexsym}

\usepackage[utf8]{inputenc}

\usepackage{microtype}

\usepackage{inconsolata}

\usepackage{graphicx}

%
%
\usepackage{amsthm}
\theoremstyle{plain}
\newtheorem{theorem}{Theorem}

\newtheorem{lemma}{Lemma}

\theoremstyle{definition}

\theoremstyle{remark}


\title{AdaZeta: Adaptive Zeroth-Order Tensor-Train Adaption for Memory-Efficient Large Language Models Fine-Tuning}
 

\author{Yifan Yang$^1$\:\:Kai Zhen$^2$\:\:Ershad Banijamali$^2$\:\:Athanasios Mouchtaris$^2$\:\:Zheng Zhang$^1$ \\
  $^1$University of California, Santa Barbara \\
  $^2$Amazon AGI \\
  yifanyang@cs.ucsb.edu\quad \{kaizhen, ebanijam, mouchta\}@amazon.com\\ zhengzhang@ece.ucsb.edu}
\begin{document}
\maketitle
\begin{abstract}
Fine-tuning large language models (LLMs) has achieved remarkable performance across various natural language processing tasks, yet it demands more and more memory as model sizes keep growing. To address this issue, the recently proposed Memory-efficient Zeroth-order (MeZO) methods attempt to fine-tune LLMs using only forward passes, thereby avoiding the need for a backpropagation graph. However, significant performance drops and a high risk of divergence have limited their widespread adoption. In this paper, we propose the Adaptive Zeroth-order Tensor-Train Adaption (AdaZeta) framework, specifically designed to improve the performance and convergence of the ZO methods. To enhance dimension-dependent ZO estimation accuracy, we introduce a fast-forward, low-parameter tensorized adapter. To tackle the frequently observed divergence issue in large-scale ZO fine-tuning tasks, we propose an adaptive query number schedule that guarantees convergence. Detailed theoretical analysis and extensive experimental results on Roberta-Large and Llama-2-7B models substantiate the efficacy of our AdaZeta framework in terms of accuracy, memory efficiency, and convergence speed.\footnote[1]{Code available on GitHub \url{https://github.com/yifanycc/AdaZeta}.}\footnote[2]{Accepted by EMNLP 2024}
\end{abstract}
\begin{figure*}[t]
    \centering
    \includegraphics[width=1\linewidth]{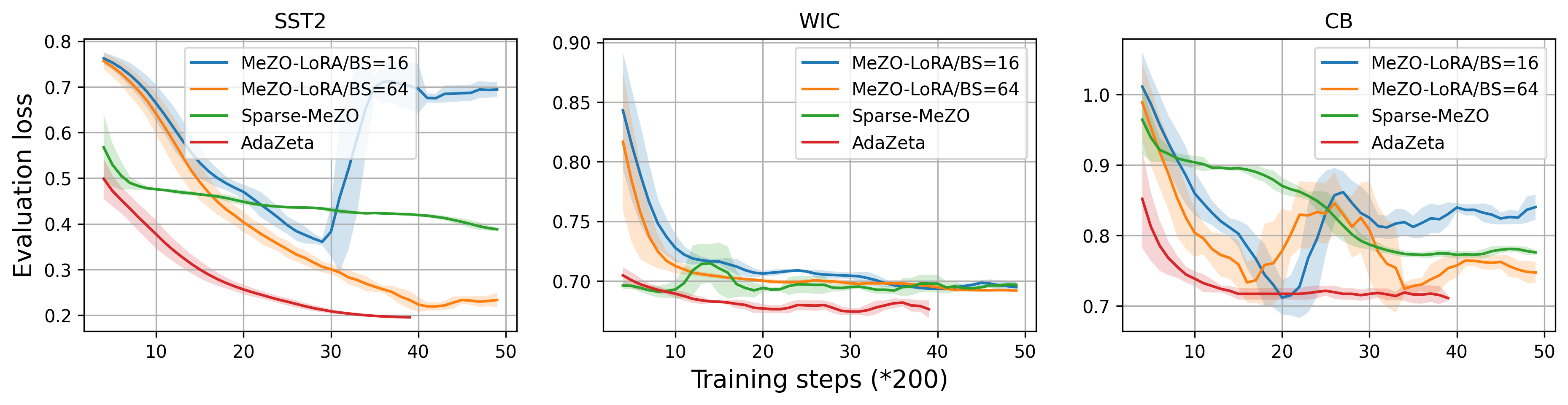}
    \caption{The evaluation loss curves for the SST-2, WiC, and CB tasks using the Llama-2-7B model. The proposed AdaZeta method converges faster and effectively addresses the divergence problem using a much smaller batch size (BS). Both MeZO-LoRA and AdaZeta use a learning rate of 1e-4, while Sparse-MeZO utilizes a 1e-6 learning rate.}
    \label{fig:main}
    \vspace{-10pt}
\end{figure*}
\section{Introduction}
Fine-tuning large language models (LLMs) has demonstrated outstanding performance in addressing numerous natural language processing applications, such as natural language understanding \cite{kenton2019bert}, question-answering \cite{xu2024can, cheng2023ghostt5}, and summarization \cite{zhang2024benchmarking}. However, as the size of LLMs increases, the training process consumes progressively more GPU memory. In recent years, approaches such as quantization \cite{tian2023bebert, dettmers2024qlora} and parameter-efficient fine-tuning (PEFT) \cite{hu2021lora} have been proposed to reduce memory costs during training by storing data with lower bit-depth or updating only a portion of the parameters. Despite these strategies effectively reducing memory costs, overall memory usage remains high due to the continuous reliance on a backpropagation graph.\\~\\
To further reduce the memory overhead, \cite{malladi2023fine} proposed the Memory-efficient Zeroth-order (MeZO) method for LLM fine-tuning, which shows over 8$\times$ memory reduction compared with the first-order (FO) fine-tuning methods like SGD \cite{amari1993backpropagation} and AdamW \cite{loshchilov2018decoupled}. Unlike FO methods, which calculate gradients via backpropagation, the MeZO method estimates gradients based on the difference between loss values obtained from two forward passes, thereby eliminating the need for a backpropagation graph. However, two main challenges persist in the zeroth-order (ZO) fine-tuning of LLMs: 1) a significant performance gap between FO and ZO approaches, and 2) increased risk of divergence, particularly in the ZO fine-tuning of large-scale LLMs, as observed in recent studies \cite{gautam2024variance}.\\~\\
To improve the performance, various FO optimization techniques have been adapted for ZO fine-tuning scenarios, like the ZO-AdaMU method \cite{jiang2024zo}. However, these approaches fail to accommodate the specific needs of ZO methods, and add significant memory overhead from the optimizer state. Given the dimensionality-related nature of ZO convergence rates, \cite{liu2024sparse} propose the Sparse-MeZO method that generates pruning masks based on the value of the weight elements. Nevertheless, the Sparse-MeZO method yields inconsistent performance across various tasks and hyperparameter configurations. In contrast to this approach, we consider using the PEFT method to reduce the number of trainable parameters. Although the ZO PEFT method like MeZO-LoRA has been considered in \cite{malladi2023fine}, the improvements are limited as the LoRA adapter fails to offer high representational ability with an ultra-low rank. To solve this problem, we involve tensorized adapters, which offer high performance with even lower trainable parameters than LoRA adapters.\\~\\
To address the variance-related divergence issue in large-scale ZO fine-tuning, previous studies \cite{malladi2023fine, jiang2024zo} have primarily focused on adjusting the batch size, as increasing the batch size can reduce the noise in ZO gradient estimation. However, these approaches introduce significant runtime overhead and fail to improve performance significantly. To further reduce variance, \cite{gautam2024variance} introduced the MeZO-SVRG method, adapting the first-order SVRG technique to the ZO context. Despite its success, MeZO-SVRG suffers from a slow and memory-inefficient fine-tuning process due to the additional parameter copies and computation process that even doubles the memory cost of the MeZO methods. In contrast to these works, we consider reducing the ZO gradient variance with a sublinearly increasing query\footnote[2]{A query refers to request the gradient of the loss function for one time in this paper \cite{bubeck2015convex}[Sec. 4.1.4].} schedule that achieves not only better accuracy but also faster convergence in terms of both steps and time.\\~\\
This paper explores task-specific PEFT training for ZO fine-tuning scenarios. We introduce the Adaptive Zeroth-order Tensor-Train Adaption (AdaZeta) framework, which incorporates fast-forward tensorized adapters and an adaptive query schedule. This combination can significantly enhance the accuracy and convergence of ZO fine-tuning, as demonstrated in Fig. \ref{fig:main}. Our contributions are summarized as follows:
\begin{itemize}
\item We introduce the AdaZeta framework, outperforming other ZO fine-tuning methods like MeZO, MeZO-LoRA, and Sparse-MeZO across different tasks with faster convergence.\vspace{-8.1pt}
\item We develop an adaptive query number schedule that sub-linearly increases the number of queries to address the persistent divergence issue in ZO fine-tuning.
\vspace{-8.1pt}
\item We provide both theoretical and experimental results to demonstrate the training efficiency and performance of our method.
\end{itemize}
\section{Background}
\subsection{Parameter-Efficient Fine-tuning}
In recent years, various works related to PEFT methods have been proposed. Beyond the most widely used methods like Adapters \cite{houlsby2019parameter} and LoRA \cite{hu2021lora}, there are also methods exploring ultra-low trainable parameter solutions \cite{zaken2022bitfit, li2021prefix, liu2022few}. In \cite{malladi2023fine}, researchers try to employ the LoRA and prefix-tuning \cite{li2021prefix} methods during the ZO fine-tuning. However, the improvement is limited and the detailed analysis of ZO PEFT tuning is not discussed. \\~\\
In this paper, we explore tensorized adapters, an ultra-low-parameter PEFT method that compresses the weight matrices of adapter layers using Tensor-Train (TT) decomposition. This approach is examined in \cite{yang2024loretta}, where it demonstrates strong performance in FO fine-tuning tasks. However, the contraction process of TT format \cite{oseledets2011tensor, novikov2015tensorizing} involving a sequence of small tensor factors slows down the forward pass, making it less suitable for ZO methods that require two forward passes per step. To solve this problem, we propose parallel contraction methods to improve the inference speed of tensorized adapter methods.
\begin{figure}[t]
\centering
\includegraphics[width=0.51\textwidth]{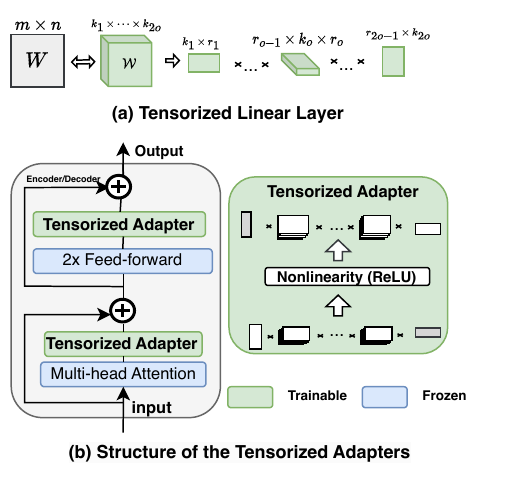}
\vspace{-30pt}
\caption{Illustration for tensorized linear layer and tensorized adapters.}
\label{fig:adapters}
\end{figure} 
\subsection{Tensorized Adapters}
As shown in Fig. \ref{fig:adapters} (a), the tensorized adapters, which are built upon tensorized linear layers, are lightweight components injected during the fine-tuning process to reduce the number of trainable parameters. The weight in tensorized linear layers is represented in the TT format. Compared with a standard weight matrix $\bm{W}\in\mathbb{R}^{m\times n}$ in a typical linear layer, the TT format represents its reshaped $2o$-way tensor $\mathcal{W}\in\mathbb{R}^{k_1\times\cdots\times k_{2o}}$ as a sequence of tensor factors $[\mathcal{G}_{1}, \cdots, \mathcal{G}_{o}, \mathcal{G}_{o+1}, \cdots \mathcal{G}_{2o}]$ \cite{oseledets2011tensor}, where each tensor factor $\mathcal{G}_i \in \mathbb{R}^{r_{i-1} \times k_i \times r_i}$ has rank $r_{i-1}$ and $r_i$. The dimensions $k_i$ are constrainted such that $\Pi_{i=1}^o k_i = m$ and $\Pi_{j=o+1}^{2o} k_j = n$. During the forward pass, the sequence of tensor factors is contracted and reshaped back into the shape of a weight matrix as 
 \begin{align}\label{eq:contraction}
     \bm{W} = \text{Reshape}(\mathcal{G}_1\times \cdots\times \mathcal{G}_{2o}).
 \end{align}
Note that in this paper, the tensor rank is held constant, with the exception of the first and last ranks, which are set $r_0=r_{2o}=1$. Also, the weights in tensorized layers are initialized, stored, and updated in TT-format instead of the matrix form in a traditional linear layer. \\~\\
The structure of tensorized adapters is shown in Fig. \ref{fig:adapters} (b). Each tensorized adapter contains two tensorized layers and a non-linear layer in between. For each encoder/decoder block, the tensorized adapters are attached after the attention and feed-forward layer. Different from \cite{yang2024loretta} that makes both tensorized adapters and layer norm trainable, we freeze the layer norm during the ZO fine-tuning, as noisy gradient estimation of the scaling factor in layer normalization can seriously degrade model performance. The tensorized adapters reduce trainable parameters by over $80\times$, making them a better fit for ZO fine-tuning.
\section{Methods}
In this section, we first introduce some basic knowledge of the ZO gradient estimator. Then, we present our AdaZeta method, a powerful framework designed to improve the performance of ZO LLM fine-tuning with two main components: 1) the fast-forward tensorized adapters, and 2) an adaptive query number schedule. Finally, we provide a theoretical analysis of the convergence rate of the AdaZeta method, demonstrating the improved convergence rate theoretically.
\subsection{Zeroth-order Estimation}
Traditional ZO estimation has been widely studied in both convex and non-convex optimization setups \cite{ghadimi2013stochastic, malladi2023fine, chen2019zo}. In our problem, considering a supervised dataset $\mathcal{D}$, mini-batch $\mathcal{B}$ with the size of $D$ and $B$ respectively, we set the loss function for our fine-tuning problem to be $\ell(\bm{w}; \mathcal{B})$, where the trainable parameter in the tensorized adapters $\bm{w}\in \mathbb{R}^d$ has a size of $d$. Then, the Randomized Zeroth-order Gradient Estimation (RGE) at training step $k$ is given as:
\begin{equation*}
\nabla\hat{\ell}(\bm{w}_k) = \sum_{q=1}^{Q_k}\frac{\ell_\mathcal{B}(\bm{w}_k + \epsilon \bm{z}_q) - \ell_\mathcal{B}(\bm{w}_k - \epsilon \bm{z}_q)}{2\epsilon}\bm{z}_q
\end{equation*}
where $Q_k$ is the query number at the training step $k$, $\bm{z}_q \sim \mathcal{N}(0, \mathbb{I}_d)$ is the vector-wise random perturbation for each query $q$, and $\epsilon$ is a scaling factor for the perturbation.\\~\\
Unlike FO fine-tuning, which relies on backpropagation, RGE requires only two forward passes with perturbations added to the weights of tensorized adapters, eliminating the need for a backpropagation graph. Additionally, by sublinearly increasing the number of queries at the beginning of each epoch, we effectively reduce the variance of the ZO gradient estimation by involving distinct perturbations $\bm{z}_q$ at each time of query. Details of the setup will be discussed in the following section.
\begin{algorithm}[t]
\textbf{Input:} Parameters $\bm{w}$, loss function $\ell(\cdot)$, random seed $s_q$, scaling factor $\epsilon$, Query-realted constant $\alpha, \beta$, maximum query $Q_{max}$, learning rate $\eta$.
\begin{algorithmic}[1]
 \FOR{$k=1, \cdots, K$}
 \STATE Calculating query number at epoch $e_k$ start:\vspace{-5pt}$$Q_k:=\min(\alpha e_k^{\beta},Q_{max})$$\vspace{-20pt}
\FOR{$q=1, \cdots, Q_k$}
  \STATE$\bm{w}\leftarrow\bm{w} + \epsilon \bm{z}_q,\quad \bm{z}_q\sim \mathcal{N}(0,\mathbb{I}_d, s_q)$
  \STATE$\ell_+^q\leftarrow \ell(\bm{w}, \mathcal{B})$
  \STATE$\bm{w}\leftarrow\bm{w} -2 \epsilon \bm{z}_q, \quad\bm{z}_q\sim \mathcal{N}(0,\mathbb{I}_d, s_q)$
  \STATE$\ell_-^q\leftarrow \ell(\bm{w}, \mathcal{B})$
  \STATE $\bm{w}\leftarrow\bm{w} + \epsilon \bm{z}_q,\quad \bm{z}_q\sim \mathcal{N}(0,\mathbb{I}_d, s_q)$
   \STATE Reset random seed $s_q$ for generating $\bm{z}_q$ 
    \ENDFOR
   \STATE   $\nabla_{\bm{w}} \hat{\ell}(\bm{w})=\frac{1}{Q_k} \sum_{q=1}^{Q_k}\left[\frac{\ell_+^q - \ell_-^q}{2\epsilon} \bm{z}_q\right]$
   \STATE $\bm{w}\leftarrow\bm{w} - \eta *\nabla_{\bm{w}} \hat{\ell}(\bm{w})$
  \ENDFOR
 	\end{algorithmic}  
\caption{AdaZeta Algorithm}\label{alg:ttzo}
\end{algorithm}
\subsection{The AdaZeta Framework}
Previous ZO fine-tuning methods, such as MeZO, typically estimate the gradient for a large number of trainable parameters simultaneously using RGE. This approach results in high variance due to the dimension-related nature of the RGE method. Although techniques like LoRA and prefix tuning have been considered, few works consider the tasks-specific PEFT adapters for the ZO LLMs fine-tuning. Additionally, as shown in Fig. \ref{fig:main}, we have observed an increased risk of divergence when using the MeZO-LoRA method during fine-tuning. To address these issues, we propose our AdaZeta framework to improve performance and solve the instability problem of the vanilla MeZO method. Our framework includes the following components:\\~\\
\textbf{Fast Forward Tensorized Adapters.} The Parameter-efficient issue has been widely studied in the FO cases, where people often freeze the pre-trained model parameters and fine-tune the LLMs by adding trainable adapters along with the frozen pretrain weights. Since the ZO estimation accuracy is dimension-dependent, reducing dimensionality can significantly help improve the gradient estimation quality. Thus, we consider injecting the ultra-low parameter tensorized adapters in our AdaZeta framework to reduce the number of trainable parameters while retaining the performance.\\~\\
As we have mentioned, ZO fine-tuning mainly relies on gradient estimation with two forward passes at each step. Thus, the speed of the forward pass is a crucial factor for the overall speed of ZO fine-tuning. Instead of using the sequential contraction method during the forward pass as in previous work, we propose a new parallel contraction method to speed up the forward passes. This method divides the sequence of tensor factors into several groups to enable parallel processing and avoid the presence of high-dimensional tensors. Taking a bipartite case as an example, the contraction process in eq. (\ref{eq:contraction}) is replaced by:
\begin{align*}
    \bm{W} = \text{R}(\prod_{i=1}^{o} \mathcal{G}_i \prod_{j=o+1}^{2o} \mathcal{G}_j),
\end{align*}
where \(\mathcal{G}_i\) represents the \(i\)-th tensor factor, \(\text{R}(\cdot)\) represents the reshape operation. For larger models, the tensor factors can be organized into tripartite or quadripartite structures to accelerate the inference speed of the tensorized methods. \\~\\
\textbf{Adaptive Query Adjustment for ZO estimation.} 
As previously noted, the training process for existing ZO methods often exhibits instability, particularly with large-size models where divergence issues frequently occur. Previous studies \cite{chen2019zo, jiang2024zo} have explored using a fixed multiple queries scheme to improve the estimation accuracy in the optimization community. However, utilizing a fixed number of queries may significantly hinder the training efficiency of large-scale ZO fine-tuning tasks, as naively increasing the number of perturbations greatly escalates training durations. To solve this problem, we consider a simple but effective sublinear increasing query number adjustment schedule, where the number of queries is updated at the beginning of each epoch $e_k$. By expressing the epoch in terms of the global training steps as $e_k=\lfloor k / \lceil \frac{D}{B}\rceil \rfloor$, we have:
\begin{equation}\label{eq:query}
 Q_k:=\min(\alpha e_k^{\beta},Q_{max})   
\end{equation} 
with a fixed scaling factor $\alpha\in(0,1)$, a sublinear increasing factor $\beta\in(0,1)$ and a max query threshold $Q_{max}$. Then, the query number is fixed for all training steps within each epoch. This adjustment solves all divergence problems we observed with theoretical guarantee and performs even faster than the traditional way to solve the divergence problem for ZO LLMs fine-tuning by increasing the batch size.\\~\\
The corresponding optimization algorithm used in the AdaZeta framework is shown in Alg. \ref{alg:ttzo}. We adjust the query number at the beginning of each epoch. Different from the MeZO algorithm, we obtain the gradient used for the model update by taking the average over multiple query results. Note that we fix the query number to be 1 when fine-tuning medium-size models like Roberta-Large since the noise of ZO estimation is relatively low when the number of trainable parameters is small. Later, we will show that a sublinear increasing query number benefits the convergence of the problem when the model size is large, both theoretically and experimentally.
\begin{table*}[th]
\centering
\caption{Comparative analysis of various ZO fine-tuning methods on the Roberta-Large models.}
\label{tab:bert}
\resizebox{0.6\textwidth}{!}{%
\begin{tabular}{cccccccc}
\hline
\multicolumn{1}{c|}{Methods}    & RTE      & SST-2         & SST-5         & QNLI          & MNLI          & SNLI          & MR                   \\ \hline
\multicolumn{1}{c|}{FT}          & 66.4     & 91.9          & 47.5          & 63.4          & 70.0            & 77.5          & 88.2                    \\
\multicolumn{1}{c|}{Zero-Shot}    & 51.4    & 79.0            & 35.5          & 50.9          & 48.8          & 50.2          & 80.2                \\
\multicolumn{1}{c|}{LP}        & 59.4      & 76.0            & 40.3          & 57.6          & 56.5          & 66.0            & 86.6                  \\ \hline
\multicolumn{8}{c}{BS=16}                                                                                                                           \\ \hline
\multicolumn{1}{c|}{MeZO}     & 52.7      & 90.5          & 31.1          & 59.9          & \textbf{60.5} & 63.5          & 85.5                 \\
\multicolumn{1}{c|}{MeZO-LoRA}    & 52.7     & 84.2          & 44.8          & 60.3          & 58.5          & 65.6          & 85.7                  \\
\multicolumn{1}{c|}{AdaZeta}     & \textbf{66.8}          & \textbf{91.4} & \textbf{48.3} & \textbf{61.3} & 58.1          & \textbf{69.1} & \textbf{87.0}           \\ \hline
\multicolumn{8}{c}{BS=64}                                                           \\ \hline
\multicolumn{1}{c|}{MeZO}     & 64.0        & 90.5          & 45.5          & 60.5          & 58.7          & 68.5          & 85.0                     \\
\multicolumn{1}{c|}{MeZO-LoRA}  & 63.9   & 91.3          & 43.0            & 59.0            & 64.0            & \textbf{69.7} & \textbf{87.4}          \\
\multicolumn{1}{c|}{\textbf{AdaZeta}} & \textbf{64.3} & \textbf{91.5} & \textbf{49.6} & \textbf{60.7} & \textbf{68.1} & 68.7          & 86.5          \\ \hline
\end{tabular}
}
\end{table*}
\subsection{Theoretical Analysis}\label{sec:the}
In this subsection, we give the theoretical analysis for the AdaZeta framework. Our theoretical analysis highlights why the tensorized adapter and adaptive query schedule can significantly help to improve the ZO convergence rate. Unlike the theoretical analysis in the MeZO paper, which focuses on the "effective rank" for the Hessian of loss, we focus on the dimension of the optimized models $d$ (number of trainable parameters) instead. As the trainable parameters with PEFT adapters are much smaller than the model size, the theoretical analysis based on the exact dimension of the optimization problem can better help us explore the behavior of different PEFT methods. \\~\\
To align our analysis with LLM fine-tuning, we consider a non-convex optimization setup and study the convergence behavior regarding the training steps $k$. It is important to note that the ZO estimated gradient $\nabla\hat{\ell}$ by the RGE, is an unbiased estimation of the true gradient $\nabla\ell$ when $\epsilon\rightarrow 0$, which gives the fact $\mathbb{E}_{\bm{z}}[\nabla\hat{\ell}]=\nabla\ell$ \cite{nesterov2017random}. First, we list the following assumptions for our analysis:\\~\\
\textbf{A1:} The loss function $\ell$ has an L-Lipschitz continuous gradient, where for $L>0$ we have:
\vspace{-8.1pt}
\begin{equation*}
    \|\nabla\ell(\bm{w}_i) - \nabla\ell(\bm{w}_j)\|\leq L\|\bm{w}_i - \bm{w}_j\|, \forall \bm{w}_i, \bm{w}_j
\end{equation*}\vspace{-30.1pt}\\~\\
\textbf{A2:} At each step $k$, the gradient of loss function $\ell$ is upper bounded as $\|\nabla \ell\| \le \delta, \forall k$.\\~\\
Then, we offer the global convergence rate for our AdaZeta algorithm:
\begin{theorem}\label{the:convergence}
    Under A1 and A2, randomly pick $\bm{w}_T$ from history with probability $P(T=k)=\frac{1}{K}$, the convergence of the AdaZeta algorithm can be bounded by:
    \begin{align*}
   \mathbb{E}[\|\nabla\ell(\bm{w}_T)\|^2] \leq\mathcal{O}(\frac{R + \epsilon^2 L +C(d,\epsilon) \sum_{k}\frac{1}{Q_k}}{K\epsilon}),
\end{align*}
where $R$ is defined by the distance between the start point and the optimal solution $\ell(\bm{w}_1) - \ell^*$, the ZO perturbation scaling factor is represented as $\epsilon$, and $C(d,\epsilon)$ is a constant related to the model parameter size $d$, which is defined at the end of the proof in Appendix \ref{app:proof}.
\end{theorem}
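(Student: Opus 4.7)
The plan is to follow the standard recipe for non-convex zeroth-order convergence, specialized to the averaged RGE estimator with per-epoch query count $Q_k$. I would first invoke the $L$-smoothness descent lemma from A1 on the update $\bm{w}_{k+1}=\bm{w}_k-\eta\,\nabla\hat\ell(\bm{w}_k)$ to obtain
$$\ell(\bm{w}_{k+1}) \le \ell(\bm{w}_k) - \eta\langle \nabla\ell(\bm{w}_k),\nabla\hat\ell(\bm{w}_k)\rangle + \tfrac{L\eta^2}{2}\|\nabla\hat\ell(\bm{w}_k)\|^2.$$
Conditioning on $\bm{w}_k$ and taking expectation over the $Q_k$ independent Gaussian perturbations $\{\bm{z}_q\}$, I would then perform a bias-variance split: the average of the per-query RGE is an unbiased estimator of the gradient of the Gaussian-smoothed surrogate $\ell_\epsilon$; Nesterov--Spokoiny-style bounds give $\|\nabla\ell_\epsilon(\bm{w})-\nabla\ell(\bm{w})\|=\mathcal{O}(\epsilon L d)$, and the second moment of each per-query estimator is $\mathcal{O}(d\|\nabla\ell\|^2+\epsilon^2 L^2 d^2)$. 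Averaging $Q_k$ independent queries divides the zero-mean variance by $Q_k$, which is precisely where the $1/Q_k$ factor in the theorem originates.

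Second, I would assemble the pieces. The inner-product term contributes $-\eta\|\nabla\ell(\bm{w}_k)\|^2$ together with a cross term that I would control via Cauchy--Schwarz and A2 (using $\|\nabla\ell\|\le\delta$), keeping that cross term of order $\eta\epsilon L$ times a dimension-dependent constant. The quadratic term contributes $L\eta^2(\|\nabla\ell\|^2+\epsilon^2 L^2 d^2)/Q_k$ after averaging. Choosing $\eta$ proportional to $\epsilon$ and small enough that $L\eta d$ is absorbed by half of the inner-product term, rearrangement gives, per step,
$$\tfrac{\eta}{2}\,\mathbb{E}\|\nabla\ell(\bm{w}_k)\|^2 \le \mathbb{E}[\ell(\bm{w}_k)-\ell(\bm{w}_{k+1})] + \mathcal{O}(\eta\epsilon^2 L) + \mathcal{O}\!\left(\tfrac{C(d,\epsilon)}{Q_k}\right).$$
Telescoping $k=1,\dots,K$ collapses the first term to $\ell(\bm{w}_1)-\ell^\ast\le R$; dividing by $K\eta$ (which is proportional to $K\epsilon$) yields the three terms in the stated bound, and the random-index rule $P(T=k)=1/K$ converts the Cesàro average $\frac{1}{K}\sum_k\mathbb{E}\|\nabla\ell(\bm{w}_k)\|^2$ into $\mathbb{E}\|\nabla\ell(\bm{w}_T)\|^2$.

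The principal obstacle I expect is the careful bookkeeping of the dimension dependence that gets packaged into the single constant $C(d,\epsilon)$. A naive split conflates the bias contribution (which scales like $\epsilon^2 L^2 d^2$ but is not reduced by averaging) with the variance contribution (which scales like $d$ times a gradient-norm surrogate and is genuinely divided by $Q_k$); only the latter may be multiplied by $1/Q_k$, and placing any bias residual there would understate the error. A related subtlety is calibrating the use of A2 so that $\delta$ linearizes the cross term without trivializing $\|\nabla\ell\|^2$ on the left-hand side. Once these are resolved, the final step is almost mechanical: substituting the schedule $Q_k=\min(\alpha e_k^\beta,Q_{\max})$ with $\beta\in(0,1)$ makes $\sum_{k=1}^K 1/Q_k$ grow like $K^{1-\beta}$, so that after dividing by $K\epsilon$ the variance term vanishes as $K\to\infty$; this is the quantitative statement that the adaptive query schedule, combined with the reduced-dimension tensorized adapters, restores convergence.
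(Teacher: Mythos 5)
Your outline is correct and reaches the theorem, but the route through the noise bound differs from the paper's. The paper's proof shares your skeleton (descent/Taylor step under A1, telescoping, random-index trick), yet it splits the estimator noise into two explicit layers: a mini-batch sampling noise $h_k$, bounded through a finite-population sampling-variance identity $\mathbb{E}[\|h_k\|^2]=\frac{N-B}{NB}\Lambda^2$, and a per-sample ZO query noise $e_k$, whose sample variance $\Lambda^2$ is then bounded using the i.i.d.\ perturbations (yielding the $1/Q_k$ factor) together with a Ghadimi--Lan-style per-query bound and A2, giving the constant $2d\delta^2+\tfrac{\epsilon^2L^2d^2}{2}+2\delta^2$ that is packaged into $C(d,\epsilon)$; this is how the batch-size factor $\frac{N-B}{NB(B-1)}$ enters the constant, something your sketch omits by not treating the minibatch randomness separately. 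You instead work directly with the Nesterov--Spokoiny smoothed-surrogate framework and an explicit bias--variance split, which buys two things the paper leaves implicit or loose: you correctly insist that only the zero-mean variance (not the smoothing bias of order $\epsilon L d$) is divided by $Q_k$, and you make the choice $\eta\propto\epsilon$ explicit to produce the $K\epsilon$ denominator, whereas the paper simply divides by $\eta$ and writes $K\epsilon$ in the final line. Conversely, the paper's decomposition is what ties the constant to the batch size $B$ and dataset size $N$, so the two arguments package the same dimension- and query-dependence into $C(d,\epsilon)$ by different bookkeeping; both, in the end, absorb residual non-telescoping terms (your cross/bias terms, the paper's $\tfrac{\eta^2L\delta}{2}+\tfrac{B\epsilon^2L}{2(B-1)}$) into the big-$\mathcal{O}$.
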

\begin{proof}
    Details can be found in Appendix \ref{app:proof}.
\end{proof}
\noindent According to Theorem \ref{the:convergence}, we can observe that the bound is related to the query schedule. For convenience, take a simplified case with $\alpha=\beta=0.5$ and ignore the minimum in eq. (\ref{eq:query}), we have $Q_k = \frac{1}{2}\sqrt{\lfloor k / \lceil \frac{D}{B}\rceil \rfloor}$, gives $\sum_{k=1}^K\frac{1}{Q_k}\leq2\left\lceil\frac{D}{B}\right\rceil \sqrt{\left\lfloor \frac{K}{\lceil \frac{D}{B}\rceil} \right\rfloor} $,
which guarantees the true gradient approaches zero when $K\rightarrow\infty$. In contrast, using a small constant such as \( Q = 1 \) results in an upper bound of $\mathcal{O}(C(d, \epsilon)/K\epsilon)$, which becomes challenging to minimize due to the term $C(d, \epsilon)$ is directly proportional to the model size $d$. Additionally, we observe that the convergence rate is significantly influenced by the model dimension $d$. Consequently, in this paper, we also try to reduce the number of trainable parameters with the tensorized adapters.
\begin{table*}[t]
\centering
\caption{Comparative analysis of various ZO fine-tuning methods on the Llama-2-7B model.}
\label{tab:llama}
\resizebox{0.9\textwidth}{!}{%
\begin{tabular}{c|cccccccccc}
\hline
Methods               & RTE         & CB          & BoolQ         & WSC           & WIC   & SST2       & MultiRC       & COPA        & ReCoRD        & SQuAD  \\ \hline
FT                  & 61.7       & 66.1        & 84.6          & 63.4         & 65.9    & 94.0   & 45.4          & 86.0          & 81.1          & 90.7  \\
LoRA               & 85.5       & 67.8       & 84.8          & 62.5          & 73.9     & 94.8   & 85.0            & 81.0          & 79.4          & 90.5  \\ \hline
Zero-Shot           & 49.5        & 32.1        & 65.1          & 36.5          & 50.6    & 79.7     & 55.8          & 59.7        & \textbf{80.9} & 54.7    \\
ICL                  & 54.5        & 58.9        & 67.4          & \textbf{65.4} & 52.7     & 81.2    & 58.7          & 84.4        & 80.1          & 67.1 \\
MeZO                & 54.6        & 73.0          & 68.6          & 52.8          & 57.8      & 85.8  & 62.6          & 86.0          & 70.8          & 72.5    \\
MeZO-LoRA             & 59.6          & 74.0          & 71.6          & 53.0          & 55.2   & 86.8     & 67.2          & 89.0 & 72.0            & 80.0     \\
Sparse-MeZO    &	58.6	&76.0	&67.8	&53.0	&56.8&85.2	&61.2&	86.0	&70.6	&64.4 \\
\textbf{AdaZeta} & \textbf{74.0} & \textbf{75.0} & \textbf{79.4} & 52.2          & \textbf{58.0} &\textbf{91.0}& \textbf{68.2} & \textbf{94.0}          & 71.2          & \textbf{80.0  }        \\ \hline
\end{tabular}
}
\end{table*}
\section{Experiments}
In this section, we conduct comprehensive experiments to evaluate the performance of our proposed AdaZeta framework across several LLMs with different scales on a variety of natural language understanding and generation tasks \cite{socher2013recursive, williams2017broad, rajpurkar2016squad}.
We demonstrate that our methods surpass a comprehensive array of memory-efficient baselines, including inference-only methods such as Zero-shot \cite{brown2020language}, In-Context Learning (ICL), and Linear Probing (LP) \cite{kumar2021fine}, as well as ZO fine-tuning methods like MeZO, MeZO-LoRA \cite{malladi2023fine}, and Sparse-MeZO \cite{liu2024sparse}. Also, the first-order fine-tuning (FT) baseline is also provided as a reference. \\~\\
Initially, we present experimental evidence using Roberta-Large models \cite{liu2019roberta}, illustrating that the integration of tensorized adapters can significantly enhance the efficiency of ZO fine-tuning by reducing the number of trainable parameters. Subsequently, we enabled our proposed adaptive query schedule method to show the effectiveness of the AdaZeta framework on large-scale Llama-2-7B models \cite{touvron2023LLaMA}, which not only enhances performance but also ensures robust convergence. All experiments are conducted on NVIDIA Tesla A100-40GB GPUs, with further details about the experimental setup available in Appendix \ref{app:exp}.
\subsection{Medium-size Roberta-Large Models}
We initially evaluated the effectiveness of using tensorized adapters on RoBERTa-large models across various tasks, including single-sentence tasks like SST-2 and SST-5, natural language inference tasks such as QNLI, MNLI, SNLI, RTE, and the sentiment analysis dataset Movie Reviews (MR). The results are summarized in Table \ref{tab:bert}. Experiments were conducted under a 16-shot setup, with 16 data samples in each class of the datasets. We monitored the best test accuracy every 500 steps, using a test pool of 1,000 data samples. Note that, similar to previous ZO fine-tuning studies, we \textbf{fixed the number of queries to 1 in this subsection}. This decision is based on the observation that gradient noise is relatively small in medium-sized Bert-based models. The following conclusions have been reached:\\~\\
\textbf{AdaZeta Shows Higher Accuracy than Other ZO Fine-Tuning Methods.} According to our observations in Table \ref{tab:bert}, AdaZeta outperforms other ZO fine-tuning approaches in terms of evaluation accuracy. Compared with MeZO-LoRA, which also involves PEFT adapters, AdaZeta outperforms in 5 out of 7 tests under both 16 and 64 batch size (BS) settings. This advantage shows the effectiveness of improving ZO estimation accuracy by further reducing the number of trainable parameters with the tensorized adapter. This is supported by the dimension-related convergence rate proved in Section \ref{sec:the}.\\~\\
\textbf{AdaZeta Demonstrates Improved Convergence.} Compared to the MeZO-LoRA method, the AdaZeta method exhibits superior convergence when the batch size is 16. Given our 16-shot training setup, it is reasonable to expect that the 16 batch size scenario would outperform the 64 batch size scenario if the fine-tuning process converges effectively. However, a performance decline is observed with the MeZO-LoRA method, indicating that it is adversely affected by ZO gradient noise. Comparatively, the AdaZeta method achieves consistent results across both setups by reducing such noise with less trainable parameters, effectively showcasing its ability to aid in convergence.
\subsection{Large-scale Llama-2 Models}
In the previous section, we demonstrated how utilizing the tensorized adapter method enhances ZO fine-tuning performance by reducing gradient noise through a decrease in trainable parameters. In this section, we assess the effectiveness of the AdaZeta framework with the large-scale Llama-2-7B model. Differing from the experiments on the Roberta-Large models, we enabled the adaptive query schedule method proposed in our AdaZeta framework to mitigate the commonly observed divergence issues in large-scale ZO fine-tuning.\\~\\
To highlight the challenge of our experiments, we adopt a low-data resource approach using datasets from SuperGLUE \cite{wang2019superglue} and generative tasks such as SQuAD \cite{rajpurkar2016squad} and DROP \cite{dua2019drop}. Our experimental protocol follows the prompted-based fine-tuning strategy outlined in the MeZO paper \cite{malladi2023fine}. The quantitative results are summarized in Table \ref{tab:llama} and the training curves have been shown in Fig. \ref{fig:main}. Note that it is reasonable to observe some large accuracy gap between different methods under different tasks, which has also been observed in previous MeZO and PEFT papers \cite{malladi2023fine, hu2023llm}. The following conclusions are drawn:\\~\\
\textbf{AdaZeta Method Demonstrates Superior Performance Over Traditional ZO Fine-Tuning.} The AdaZeta framework delivers exceptional accuracy results across a variety of tasks, outperforming all ZO baseline methods such as MeZO and MeZO-LoRA in 8 out of 10 tasks. Compared with traditional inference-only methods like ICL and Zero-shot, AdaZeta significantly surpasses them with respect to test accuracy. Moreover, the AdaZeta method even outperforms the FO-AdamW methods over several tasks like RTE, CB, and COPA, which require 8$\times$ more GPU memory.\\~\\
\textbf{AdaZeta Method Effectively Addresses Divergence Issues in ZO Fine-Tuning.} We can observe from the table that the MeZO and MeZO-LoRA methods achieve unsatisfied results in some tasks like SST2, RTE, and BoolQ compared with our proposed method, which is led by the convergence issue. Also, we have shown that the AdaZeta method achieves lower evaluation loss much faster than the MeZO-LoRA and Sparse-MeZO methods across all tasks in Fig.~\ref{fig:main}. 
For example, the MeZO-LoRA method requires nearly 6K steps to achieve a loss of 0.4, whereas the AdaZeta method achieves the same degree of loss minimization in less than 1K steps, which represents a 6$\times$ speed-up with the same 1e-4 learning rate.
Traditional ways to solve such divergence issues through increasing the batch size are hard to follow in the large-scale LLMs fine-tuning tasks. In contrast, the adaptive query schedule in the AdaZeta framework successfully mitigates this issue without increasing the training memory, thereby improving training outcomes. Additionally, we observed that combining LoRA with the adaptive query schedule significantly improves performance in certain tasks. Future work could also explore incorporating the adaptive query schedule into the MeZO-LoRA method to further enhance stability.
\begin{table}[t]
\centering
\caption{Required GPU hours (GPU numbers $\times$ Training hours) to achieve each evaluation loss for different ZO fine-tuning methods on Llama-2-7B model.}
\label{tab:time}
\resizebox{0.45\textwidth}{!}{%
\begin{tabular}{c|cccc}
\hline
Methods          & SST2 & WIC  & CB   & MultiRC      \\ \hline
MeZO-LoRA(BS=64) & 3.0  & 4.8  & 8,6   &   30.0       \\
MeZO-LoRA(BS=16) & 0.6  & 1.1  & 3.1  & 10.8          \\
Sparse-MeZO      & 4.1  & 3.6  & 4.3  & 6.4 \\
AdaZeta     & 1.1  & 1.0 & 0.9 & 12.1          \\ \hline
\end{tabular}
}
\end{table}
\begin{figure}[t]
    \centering
\includegraphics[width=1\linewidth]{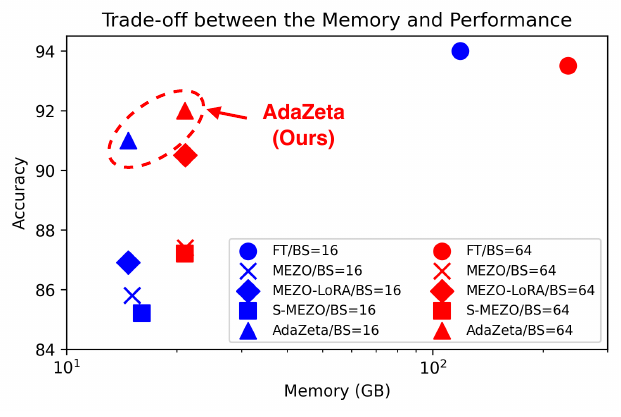}
    \caption{Trade-off between the accuracy and memory cost for different fine-tuning methods. We can observe that the AdaZeta method achieves the best accuracy among the memory-efficient 
methods.}
    \label{fig:memory}
\end{figure}
\subsection{Memory Training Time Efficiency}\label{sec:mem}
In this section, we evaluate the memory and time efficiency of the AdaZeta method. Specifically, we test the peak memory cost of different fine-tuning methods over the Llama-2-7B model and study the trade-off between memory, accuracy, and training time. The result is summarized in Fig. \ref{fig:memory} and further discussion about training memory can be referred to Appendix \ref{app:memory}.\\~\\
According to Fig. \ref{fig:memory} (refer to Appendix \ref{app:memory} for numerical results), the AdaZeta method requires only 14GB of memory to fine-tune the SST2 tasks on the Llama-2-7B model, which achieves over 8$\times$ Memory Reduction Relative to the FT Method. Also, compared with other ZO fine-tuning methods like MeZO, MeZO-LoRA, and Sparse-MeZO, the AdaZeta method utilizes similar or even less memory to achieve variance reduction. Traditional ways to reduce the ZO gradient estimation noise like increasing the batch size, consume significantly more memory than the AdaZeta method as shown in Fig. \ref{fig:memory}.  \\~\\
In Table \ref{tab:time}, we measure the total GPU hours required to achieve a certain threshold of training loss across four tasks (SST2, WIC, CB, MultiRC). For the applicability of the experiments, we established an evaluation loss threshold that all methods could achieve. According to the results, it is evident that the AdaZeta method converges on-par or faster than other ZO fine-tuning methods with even better results than the MeZO-LoRA and Sparse-MeZO methods under the large-batch size case. Note that we \textbf{did not} utilize the gradient accumulation technique for the 64 batch size case, which may significantly increase the training time.
\begin{table*}[t]
\centering
\caption{Compare with first-order LoRA method under low ranks and batch sizes.}
\label{tab:lora}
\resizebox{0.98\textwidth}{!}{%
\begin{tabular}{c|cccccc}
\hline
\textbf{Setup}   & \textbf{LoRA/r=1/BS=1} & \textbf{LoRA/r=1/BS=8} & \textbf{LoRA/r=8/BS=8} & \textbf{AdaZeta/r=8/BS=1} & \textbf{MeZO-LoRA/r=8/BS=16} & \textbf{AdaZeta/r=8/BS=16} \\ \hline
Memory (GB) & 35.60                   & 96.65                  & 96.72                  & 14.05   & 23.02                  & 23.01                      \\ \hline
\end{tabular}
}
\end{table*}
\subsection{Further Comparison with LoRA}
In this section, we further compare our AdaZeta method with the first-order LoRA method in terms of training memory usage across different ranks and batch sizes. The results for the CB task are presented in Table \ref{tab:lora}. We make the following observations under two scenarios:\\~\\
\textbf{Reducing the LoRA Rank:} Reducing the LoRA rank (even down to 1) has minimal impact on training memory in the first-order setting. The reason is that the backpropagation graph—which contains intermediate gradient information—still needs to be retained, spanning almost the entire model in the vanilla LoRA approach.\\~\\
 \textbf{Reducing the Batch Size:} Reducing the batch size is a more effective way to reduce the training memory for both FO and ZO cases. With the existence of a backpropagation graph, it is reasonable to observe a larger reduction of training memory of the FO method than ZO when reducing the number of batch sizes. However, we can observe that even when comparing our method with the LoRA method using a batch size of 1, our method is still 2.5$\times$ more memory-efficient. Additionally, even comparing AdaZeta/r=8/BS=16 with LoRA/r=1/BS=1, we still achieve nearly a 50\% reduction in memory usage. However, we would like to remark that the batch size of 1 setup is rarely used in practice due to the following reasons:
 \begin{itemize}
     \item First, reducing the batch size will dramatically increase the training time of the LoRA method.
     \item Second, such a small batch size leads to large stochastic noise during the fine-tuning process, which further harms the training performance. \cite{hu2023llm}
 \end{itemize}
\section{Conclusion}
In this paper, we propose an adaptive zeroth-order fine-tuning framework with tensor-train decomposition, named AdaZeta. Compared with previous ZO fine-tuning works, the AdaZeta method achieves significantly better fine-tuning results across various tasks and models. Theoretical analysis has confirmed that our proposed methods enjoy better convergence, which is consistent with our experimental results on both Roberta-Large and Llama-2 models across various fine-tuning tasks.\\~\\
Future work could explore improving the efficiency of the AdaZeta method by implementing distributed optimization across multiple GPUs for handling multiple queries concurrently at each step. Additionally, applying the adaptive query schedule to other PEFT methods may yield significantly better performance compared to the original MeZO algorithm.
\section*{Acknowledgements}
This project was supported by Amazon. We extend our gratitude to Siegfried Kunzmann, Jiajun Zhou, Clement Chung, Samridhi Choudhary, Hieu Nguyen and the many other colleagues at Amazon AGI and UCSB who engaged in discussions that shaped this work.\\~\\
This research also utilized resources from the National Energy Research Scientific Computing Center (NERSC), a U.S. Department of Energy Office of Science User Facility, supported under Contract No. DE-AC02-05CH11231 through NERSC award ASCR-ERCAP0030039.
\section*{Limitations}
The primary limitation of this work is related to accelerating the proposed method. Currently, multiple queries at each training step are executed sequentially in a for-loop, which restricts further speed enhancements. This process can potentially be optimized by implementing parallel or distributed optimization techniques on GPUs, allowing for the simultaneous execution of multiple queries, as these queries are independent of each other with different random seeds.
\section*{Potential Risks}
This paper provides a cost-effective solution that operates with a minimal memory footprint. Even though we need to fine-tune large-scale models, the proposed method can alleviate the burden on data centers and reduce $CO_{2}$ emissions. However, we acknowledge that prolonged training times, especially with multiple GPUs, can pose environmental challenges. Consequently, our ongoing research endeavors are focused on developing more efficient training methods and preserving computational power with ecological considerations in mind.
\bibliography{custom}

\begin{thebibliography}{41}
\providecommand{\natexlab}[1]{#1}

\bibitem[{Amari(1993)}]{amari1993backpropagation}
Shun-ichi Amari. 1993.
\newblock Backpropagation and stochastic gradient descent method.
\newblock \emph{Neurocomputing}, 5(4-5):185--196.

\bibitem[{Bowman et~al.(2015)Bowman, Angeli, Potts, and Manning}]{bowman2015large}
Samuel Bowman, Gabor Angeli, Christopher Potts, and Christopher~D Manning. 2015.
\newblock A large annotated corpus for learning natural language inference.
\newblock In \emph{Proceedings of the 2015 Conference on Empirical Methods in Natural Language Processing}, pages 632--642.

\bibitem[{Brown et~al.(2020)Brown, Mann, Ryder, Subbiah, Kaplan, Dhariwal, Neelakantan, Shyam, Sastry, Askell et~al.}]{brown2020language}
Tom Brown, Benjamin Mann, Nick Ryder, Melanie Subbiah, Jared~D Kaplan, Prafulla Dhariwal, Arvind Neelakantan, Pranav Shyam, Girish Sastry, Amanda Askell, et~al. 2020.
\newblock Language models are few-shot learners.
\newblock \emph{Advances in neural information processing systems}, 33:1877--1901.

\bibitem[{Bubeck et~al.(2015)}]{bubeck2015convex}
S{\'e}bastien Bubeck et~al. 2015.
\newblock Convex optimization: Algorithms and complexity.
\newblock \emph{Foundations and Trends{\textregistered} in Machine Learning}, 8(3-4):231--357.

\bibitem[{Chen et~al.(2019)Chen, Liu, Xu, Li, Lin, Hong, and Cox}]{chen2019zo}
Xiangyi Chen, Sijia Liu, Kaidi Xu, Xingguo Li, Xue Lin, Mingyi Hong, and David Cox. 2019.
\newblock Zo-adamm: Zeroth-order adaptive momentum method for black-box optimization.
\newblock \emph{Advances in neural information processing systems}, 32.

\bibitem[{Cheng et~al.(2023)Cheng, Zhu, Yao, Li, Li, and Zou}]{cheng2023ghostt5}
Xuxin Cheng, Zhihong Zhu, Ziyu Yao, Hongxiang Li, Yaowei Li, and Yuexian Zou. 2023.
\newblock Ghostt5: generate more features with cheap operations to improve textless spoken question answering.
\newblock In \emph{Proc. INTERSPEECH}, pages 1134--1138.

\bibitem[{Dettmers et~al.(2024)Dettmers, Pagnoni, Holtzman, and Zettlemoyer}]{dettmers2024qlora}
Tim Dettmers, Artidoro Pagnoni, Ari Holtzman, and Luke Zettlemoyer. 2024.
\newblock Qlora: Efficient finetuning of quantized llms.
\newblock \emph{Advances in Neural Information Processing Systems}, 36.

\bibitem[{Dua et~al.(2019)Dua, Wang, Dasigi, Stanovsky, Singh, and Gardner}]{dua2019drop}
Dheeru Dua, Yizhong Wang, Pradeep Dasigi, Gabriel Stanovsky, Sameer Singh, and Matt Gardner. 2019.
\newblock Drop: A reading comprehension benchmark requiring discrete reasoning over paragraphs.
\newblock \emph{arXiv preprint arXiv:1903.00161}.

\bibitem[{Gautam et~al.(2024)Gautam, Park, Zhou, Raman, and Ha}]{gautam2024variance}
Tanmay Gautam, Youngsuk Park, Hao Zhou, Parameswaran Raman, and Wooseok Ha. 2024.
\newblock Variance-reduced zeroth-order methods for fine-tuning language models.
\newblock \emph{arXiv preprint arXiv:2404.08080}.

\bibitem[{Ghadimi and Lan(2013)}]{ghadimi2013stochastic}
Saeed Ghadimi and Guanghui Lan. 2013.
\newblock Stochastic first-and zeroth-order methods for nonconvex stochastic programming.
\newblock \emph{SIAM journal on optimization}, 23(4):2341--2368.

\bibitem[{Houlsby et~al.(2019)Houlsby, Giurgiu, Jastrzebski, Morrone, De~Laroussilhe, Gesmundo, Attariyan, and Gelly}]{houlsby2019parameter}
Neil Houlsby, Andrei Giurgiu, Stanislaw Jastrzebski, Bruna Morrone, Quentin De~Laroussilhe, Andrea Gesmundo, Mona Attariyan, and Sylvain Gelly. 2019.
\newblock Parameter-efficient transfer learning for nlp.
\newblock In \emph{International Conference on Machine Learning}, pages 2790--2799. PMLR.

\bibitem[{Hu et~al.(2021)Hu, Shen, Wallis, Allen-Zhu, Li, Wang, Wang, and Chen}]{hu2021lora}
Edward~J Hu, Yelong Shen, Phillip Wallis, Zeyuan Allen-Zhu, Yuanzhi Li, Shean Wang, Lu~Wang, and Weizhu Chen. 2021.
\newblock Lora: Low-rank adaptation of large language models.
\newblock \emph{arXiv preprint arXiv:2106.09685}.

\bibitem[{Hu et~al.(2023)Hu, Lan, Wang, Xu, Lim, Lee, Bing, and Poria}]{hu2023llm}
Zhiqiang Hu, Yihuai Lan, Lei Wang, Wanyu Xu, Ee-Peng Lim, Roy Ka-Wei Lee, Lidong Bing, and Soujanya Poria. 2023.
\newblock Llm-adapters: An adapter family for parameter-efficient fine-tuning of large language models.
\newblock \emph{arXiv preprint arXiv:2304.01933}.

\bibitem[{Jiang et~al.(2024)Jiang, Chen, Pan, Xiang, Lin, Wu, Liu, and Song}]{jiang2024zo}
Shuoran Jiang, Qingcai Chen, Youcheng Pan, Yang Xiang, Yukang Lin, Xiangping Wu, Chuanyi Liu, and Xiaobao Song. 2024.
\newblock Zo-adamu optimizer: Adapting perturbation by the momentum and uncertainty in zeroth-order optimization.
\newblock In \emph{Proceedings of the AAAI Conference on Artificial Intelligence}, volume~38, pages 18363--18371.

\bibitem[{Kenton and Toutanova(2019)}]{kenton2019bert}
Jacob Devlin Ming-Wei~Chang Kenton and Lee~Kristina Toutanova. 2019.
\newblock Bert: Pre-training of deep bidirectional transformers for language understanding.
\newblock In \emph{Proceedings of NAACL-HLT}, pages 4171--4186.

\bibitem[{Kumar et~al.(2021)Kumar, Raghunathan, Jones, Ma, and Liang}]{kumar2021fine}
Ananya Kumar, Aditi Raghunathan, Robbie~Matthew Jones, Tengyu Ma, and Percy Liang. 2021.
\newblock Fine-tuning can distort pretrained features and underperform out-of-distribution.
\newblock In \emph{International Conference on Learning Representations}.

\bibitem[{Li and Liang(2021)}]{li2021prefix}
Xiang~Lisa Li and Percy Liang. 2021.
\newblock Prefix-tuning: Optimizing continuous prompts for generation.
\newblock In \emph{Proceedings of the 59th Annual Meeting of the Association for Computational Linguistics and the 11th International Joint Conference on Natural Language Processing (Volume 1: Long Papers)}, pages 4582--4597.

\bibitem[{Liu et~al.(2022)Liu, Tam, Muqeeth, Mohta, Huang, Bansal, and Raffel}]{liu2022few}
Haokun Liu, Derek Tam, Mohammed Muqeeth, Jay Mohta, Tenghao Huang, Mohit Bansal, and Colin~A Raffel. 2022.
\newblock Few-shot parameter-efficient fine-tuning is better and cheaper than in-context learning.
\newblock \emph{Advances in Neural Information Processing Systems}, 35:1950--1965.

\bibitem[{Liu et~al.(2018)Liu, Kailkhura, Chen, Ting, Chang, and Amini}]{liu2018zeroth}
Sijia Liu, Bhavya Kailkhura, Pin-Yu Chen, Paishun Ting, Shiyu Chang, and Lisa Amini. 2018.
\newblock Zeroth-order stochastic variance reduction for nonconvex optimization.
\newblock \emph{Advances in Neural Information Processing Systems}, 31.

\bibitem[{Liu et~al.(2019)Liu, Ott, Goyal, Du, Joshi, Chen, Levy, Lewis, Zettlemoyer, and Stoyanov}]{liu2019roberta}
Yinhan Liu, Myle Ott, Naman Goyal, Jingfei Du, Mandar Joshi, Danqi Chen, Omer Levy, Mike Lewis, Luke Zettlemoyer, and Veselin Stoyanov. 2019.
\newblock Roberta: A robustly optimized bert pretraining approach.
\newblock \emph{arXiv preprint arXiv:1907.11692}.

\bibitem[{Liu et~al.(2024)Liu, Zhu, Gong, Cheng, Hsieh, and You}]{liu2024sparse}
Yong Liu, Zirui Zhu, Chaoyu Gong, Minhao Cheng, Cho-Jui Hsieh, and Yang You. 2024.
\newblock Sparse mezo: Less parameters for better performance in zeroth-order llm fine-tuning.
\newblock \emph{arXiv preprint arXiv:2402.15751}.

\bibitem[{Lohr(2009)}]{lohr2021sampling}
Sharon~L Lohr. 2009.
\newblock \emph{Sampling: design and analysis}.
\newblock Nelson Education.

\bibitem[{Loshchilov and Hutter(2018)}]{loshchilov2018decoupled}
Ilya Loshchilov and Frank Hutter. 2018.
\newblock Decoupled weight decay regularization.
\newblock In \emph{International Conference on Learning Representations}.

\bibitem[{Malladi et~al.(2023)Malladi, Gao, Nichani, Damian, Lee, Chen, and Arora}]{malladi2023fine}
Sadhika Malladi, Tianyu Gao, Eshaan Nichani, Alex Damian, Jason~D Lee, Danqi Chen, and Sanjeev Arora. 2023.
\newblock Fine-tuning language models with just forward passes.
\newblock \emph{arXiv preprint arXiv:2305.17333}.

\bibitem[{Nesterov and Spokoiny(2017)}]{nesterov2017random}
Yurii Nesterov and Vladimir Spokoiny. 2017.
\newblock Random gradient-free minimization of convex functions.
\newblock \emph{Foundations of Computational Mathematics}, 17(2):527--566.

\bibitem[{Novikov et~al.(2015)Novikov, Podoprikhin, Osokin, and Vetrov}]{novikov2015tensorizing}
Alexander Novikov, Dmitrii Podoprikhin, Anton Osokin, and Dmitry~P Vetrov. 2015.
\newblock Tensorizing neural networks.
\newblock \emph{Advances in neural information processing systems}, 28.

\bibitem[{Oseledets(2011)}]{oseledets2011tensor}
Ivan~V Oseledets. 2011.
\newblock Tensor-train decomposition.
\newblock \emph{SIAM Journal on Scientific Computing}, 33(5):2295--2317.

\bibitem[{Pang et~al.(2002)Pang, Lee, and Vaithyanathan}]{pang2002thumbs}
Bo~Pang, Lillian Lee, and Shivakumar Vaithyanathan. 2002.
\newblock Thumbs up? sentiment classification using machine learning techniques.
\newblock In \emph{Proceedings of the ACL-02 conference on Empirical methods in natural language processing-Volume 10}, pages 79--86.

\bibitem[{Rajpurkar et~al.(2016)Rajpurkar, Zhang, Lopyrev, and Liang}]{rajpurkar2016squad}
Pranav Rajpurkar, Jian Zhang, Konstantin Lopyrev, and Percy Liang. 2016.
\newblock Squad: 100,000+ questions for machine comprehension of text.
\newblock \emph{arXiv preprint arXiv:1606.05250}.

\bibitem[{Socher et~al.(2013)Socher, Perelygin, Wu, Chuang, Manning, Ng, and Potts}]{socher2013recursive}
Richard Socher, Alex Perelygin, Jean Wu, Jason Chuang, Christopher~D Manning, Andrew~Y Ng, and Christopher Potts. 2013.
\newblock Recursive deep models for semantic compositionality over a sentiment treebank.
\newblock In \emph{Proceedings of the 2013 conference on empirical methods in natural language processing}, pages 1631--1642.

\bibitem[{Tian et~al.(2023)Tian, Fang, Wang, and Wang}]{tian2023bebert}
Jiayi Tian, Chao Fang, Haonan Wang, and Zhongfeng Wang. 2023.
\newblock Bebert: Efficient and robust binary ensemble bert.
\newblock In \emph{ICASSP 2023-2023 IEEE International Conference on Acoustics, Speech and Signal Processing (ICASSP)}, pages 1--5. IEEE.

\bibitem[{Touvron et~al.(2023)Touvron, Lavril, Izacard, Martinet, Lachaux, Lacroix, Rozi{\`e}re, Goyal, Hambro, Azhar et~al.}]{touvron2023LLaMA}
Hugo Touvron, Thibaut Lavril, Gautier Izacard, Xavier Martinet, Marie-Anne Lachaux, Timoth{\'e}e Lacroix, Baptiste Rozi{\`e}re, Naman Goyal, Eric Hambro, Faisal Azhar, et~al. 2023.
\newblock Llama: Open and efficient foundation language models.
\newblock \emph{arXiv preprint arXiv:2302.13971}.

\bibitem[{Wang et~al.(2019)Wang, Pruksachatkun, Nangia, Singh, Michael, Hill, Levy, and Bowman}]{wang2019superglue}
Alex Wang, Yada Pruksachatkun, Nikita Nangia, Amanpreet Singh, Julian Michael, Felix Hill, Omer Levy, and Samuel Bowman. 2019.
\newblock Superglue: A stickier benchmark for general-purpose language understanding systems.
\newblock \emph{Advances in neural information processing systems}, 32.

\bibitem[{Wang et~al.(2018)Wang, Singh, Michael, Hill, Levy, and Bowman}]{wang2018glue}
Alex Wang, Amanpreet Singh, Julian Michael, Felix Hill, Omer Levy, and Samuel~R Bowman. 2018.
\newblock Glue: A multi-task benchmark and analysis platform for natural language understanding.
\newblock \emph{arXiv preprint arXiv:1804.07461}.

\bibitem[{Williams et~al.(2017)Williams, Nangia, and Bowman}]{williams2017broad}
Adina Williams, Nikita Nangia, and Samuel~R Bowman. 2017.
\newblock A broad-coverage challenge corpus for sentence understanding through inference.
\newblock \emph{arXiv preprint arXiv:1704.05426}.

\bibitem[{Williams et~al.(2018)Williams, Nangia, and Bowman}]{williams2018broad}
Adina Williams, Nikita Nangia, and Samuel~R Bowman. 2018.
\newblock A broad-coverage challenge corpus for sentence understanding through inference.
\newblock In \emph{Proceedings of NAACL-HLT}, pages 1112--1122.

\bibitem[{Xu et~al.()Xu, Ye, Li, and Chen}]{xu2024can}
Han Xu, Jingyang Ye, Yutong Li, and Haipeng Chen.
\newblock Can speculative sampling accelerate react without compromising reasoning quality?
\newblock In \emph{The Second Tiny Papers Track at ICLR 2024}.

\bibitem[{Yang et~al.(2024{\natexlab{a}})Yang, Zhou, Wong, and Zhang}]{yang2024loretta}
Yifan Yang, Jiajun Zhou, Ngai Wong, and Zheng Zhang. 2024{\natexlab{a}}.
\newblock Loretta: Low-rank economic tensor-train adaptation for ultra-low-parameter fine-tuning of large language models.
\newblock \emph{arXiv preprint arXiv:2402.11417}.

\bibitem[{Yang et~al.(2024{\natexlab{b}})Yang, Choudhary, Xie, Gao, Kunzmann, and Zhang}]{yang2024comera}
Zi~Yang, Samridhi Choudhary, Xinfeng Xie, Cao Gao, Siegfried Kunzmann, and Zheng Zhang. 2024{\natexlab{b}}.
\newblock Comera: Computing-and memory-efficient training via rank-adaptive tensor optimization.
\newblock \emph{arXiv preprint arXiv:2405.14377}.

\bibitem[{Zaken et~al.(2022)Zaken, Goldberg, and Ravfogel}]{zaken2022bitfit}
Elad~Ben Zaken, Yoav Goldberg, and Shauli Ravfogel. 2022.
\newblock Bitfit: Simple parameter-efficient fine-tuning for transformer-based masked language-models.
\newblock In \emph{Proceedings of the 60th Annual Meeting of the Association for Computational Linguistics (Volume 2: Short Papers)}, pages 1--9.

\bibitem[{Zhang et~al.(2024)Zhang, Ladhak, Durmus, Liang, McKeown, and Hashimoto}]{zhang2024benchmarking}
Tianyi Zhang, Faisal Ladhak, Esin Durmus, Percy Liang, Kathleen McKeown, and Tatsunori~B Hashimoto. 2024.
\newblock Benchmarking large language models for news summarization.
\newblock \emph{Transactions of the Association for Computational Linguistics}, 12:39--57.

\end{thebibliography}

\clearpage
\appendix
\section{Detail of Experiment Setup}\label{app:exp}
\subsection{Dataset Setup}
\begin{table}[h]
\centering
\caption{Metrics that we use to evaluate the benchmark for the Roberta-Large Model.}
\label{tab:glue_metric}
\resizebox{0.25\textwidth}{!}{%
\begin{tabular}{@{}cc@{}}
\toprule
Task Name & Metric                       \\ \midrule
SST-2     & Accuracy                      \\
SST-5   & Accuracy                      \\
QNLI      & Accuracy                          \\
MNLI      & Matched Acc.          \\
SNLI      &  Accuracy                   \\
RTE       & Accuracy                          \\        \bottomrule
\end{tabular}
}
\end{table}
Our research utilized a variety of tasks to measure the performance of the Roberta-Large model, including sentiment analysis (SST-2, SST-5 \cite{socher2013recursive}, MR \cite{pang2002thumbs}), and natural language inference (MNLI \cite{wang2018glue}, QNLI \cite{williams2018broad}, SNLI \cite{bowman2015large}, RTE \cite{wang2018glue}) tasks. Table \ref{tab:glue_metric} summarizes the evaluation metrics used for these tasks.\\~\\
Further, we extended our experiments on a large-scale Llama-2-7B model to include tasks from the SuperGLUE benchmark \cite{wang2019superglue}, which involves both classification (CB, BoolQ, WSC) and reasoning tasks (COPA and ReCoRD), as well as additional generation tasks, SQuAD \cite{rajpurkar2016squad}. For these tests, we introduced a challenging low-resource data condition, limiting our samples to 1,000 for training, 500 for validation, and 1,000 for testing, as detailed in the prompt-based task settings from Appendix D of \cite{malladi2023fine}. The metrics for these evaluations are outlined in Table \ref{tab:superglue_metric}.
\begin{table}[h]
\centering
\caption{Metrics that we use to evaluate SuperGLUE and generations tasks.}
\label{tab:superglue_metric}
\resizebox{0.2\textwidth}{!}{%
\begin{tabular}{@{}cc@{}}
\toprule
Task Name & Metric                       \\ \midrule
CB      & F1                          \\
BoolQ     & Accuracy                      \\
WSC      & F1          \\
COPA      & Accuracy                        \\
ReCoRD      & F1                                    \\
SQuAD    & F1                       \\
\bottomrule
\end{tabular}%
}
\vspace{-20pt}
\end{table}
\subsection{Baselines}
In this section, we provide a detailed introduction to the baseline method considered in our experiments, which are listed as follows: \\~\\
\textbf{Full-model First-Order Fine-Tuning (FT)} is the most widely used method for fine-tuning LLMs. In this process, the model is initialized with pre-trained weights, and all model parameters are updated by the first-order optimizer. In this paper, the AdamW optimizer \cite{loshchilov2018decoupled} is used to conduct the first-order experiments.\\~\\
\textbf{Zero-shot/In-context-learning (ICL)} is the most widely used method for fine-tuning large language models (LLMs). In this process, the model is initialized with pre-trained weights, and all model parameters are updated by the first-order (FO) optimizer. In this paper, the AdamW optimizer \cite{loshchilov2018decoupled} is used to conduct the first-order experiments.\\~\\
\textbf{Linear-probing (LP)} method involves freezing the pretrained weights of the model and adding a final linear classifier layer, implemented using the scipy package. By fine-tuning this layer with the first-order method, we only need to construct a small backpropagation graph. However, this method is not suitable for generative tasks. Therefore, we only apply the LP method in the Roberta-Large experiments.\\~\\
\textbf{Memory-Efficient Zeroth-Order (MeZO)} was first proposed in \cite{malladi2023fine}, which fine-tunes LLMs using only the forward pass. The MeZO method significantly reduces memory costs by eliminating the need for a backpropagation graph and has demonstrated superior performance compared to inference-only methods like Zero-shot, ICT, and LP methods across various downstream tasks.\\~\\
\textbf{Memory-Efficient Zeroth-Order with LoRA adapters (MeZO-LoRA)} is a derivative method introduced in \cite{malladi2023fine}, which freezes the pretrained weights and fine-tunes only the injected LoRA adapters \cite{hu2021lora}. The MeZO-LoRA method is the most relevant baseline in this field compared to our work. However, its performance improvement over the MeZO method is limited, and the mechanisms behind zeroth-order parameter-efficient fine-tuning are not extensively discussed.\\~\\
\textbf{Sparse Memory-efficient Zeroth-Order (Sparse-MeZO)}  is a recently proposed method aiming to enhance the performance and convergence speed of the MeZO method \cite{liu2024sparse}. However, as the code and detailed layer-wise hyperparameter setup have not been released, we have reproduced the method using a fixed sparsity ratio for each layer. This ratio is selected based on the best overall outcome as presented in Fig. 6 of their paper.
\subsection{Hyperparameters}
In this section, we outline the detailed setup of hyperparameters utilized in our study. The specific choices of hyperparameters, such as learning rate, training steps, and batch size, are summarized in Table \ref{tab:setup_1}. In our experiments, we strive to maintain a consistent learning rate across different methods for the same tasks. However, for approaches like full-model fine-tuning, we opt for a lower learning rate to ensure convergence. This principle is also applied in our large-scale experiments on the Llama-2-7B model, details of which are summarized in Table \ref{tab:setup_2}.\\~\\
In addition to the standard hyperparameter configuration, we also consider the shape of tensor factors in our methods. To represent a layer with input and output dimensions of \(o\) and \(p\), respectively, we employ a list of \(m\) tensor factors \(\mathcal{G}_i \in \mathbb{R}^{r \times k_i r}\), where the product \(\Pi k_1 \cdots k_m = o \cdot p\). The specific shapes of \(k_i\) corresponding to different values of \(o\) and \(p\), given a bottleneck size of 8 or 64 for the tensorized methods, are detailed in Table \ref{tab:tensor_shape}. Note that the optimal factors shape and tensor rank for the tensor-train method can only be determined by the experiments' trail. However, previous work also explores the possibility of utilizing the adaptive rank to improve the performance \cite{yang2024comera}, which may further improve the performance of our AdaZeta method.
\begin{table}[t]
\centering
\caption{The hyperparameter grids used for Roberta-Large experiments are detailed as follows. We fine-tune each task for 80K steps, except for the FT method, which is conducted over 20 epochs. We record the best model checkpoint based on the validation loss every 200 training steps.}
\label{tab:setup_1}
\resizebox{0.49\textwidth}{!}{%
\begin{tabular}{@{}ccc@{}}
\toprule
Experiment & Hyperparameters & Values \\ \midrule
FT         & Batch size      &    \{8, 16, 64\}   \\
           & Learning rate   &   \{1e-6, 5e-7\}       \\ \midrule
MeZO         & Batch size      &    \{16, 64\}   \\
           & Learning rate   &   \{1e-6, 5e-7\}        \\ 
           & $\epsilon$  &   1e-3       \\ \midrule
MeZO-LoRA         & Batch size      &    \{16, 64\}   \\
           & Learning rate   &   \{1e-4, 5e-5\}        \\
           & LoRA rank & 8\\
           & $\epsilon$  &   1e-3       \\ \midrule
Sparse-MeZO         & Batch size      &    \{16, 64\}   \\
           & Learning rate   &   \{1e-5, 1e-6\}       \\ 
           & sparse ratio & 0.75\\
           & $\epsilon$  &   1e-3       \\ \midrule

AdaZeta      & Batch size      &   \{16, 64\}     \\
           & Learning rate   &    \{1e-4, 5e-5\}      \\
           & Bottleneck dimension   &    64    \\
            & Tensor Rank    &    5  \\
            & $\epsilon$  &   1e-3        \\
            \midrule
\end{tabular}%
}
\end{table}

\begin{table}[t]
\centering
\caption{The hyperparameter grids used for Llama-2-7B experiments are outlined as follows. We fine-tune each task for 5K steps using our AdaZeta method, 10K steps for other ZO fine-tuning methods (MeZO, MeZO-LoRA, Sparse-MeZO), and 5 epochs for the first-order Full-model Fine-Tuning (FT) method. We record the best model checkpoint based on the validation loss every 200 training steps.}
\label{tab:setup_2}
\resizebox{0.5\textwidth}{!}{%
\begin{tabular}{@{}ccc@{}}
\toprule
Experiment & Hyperparameters & Values \\ \midrule
FT         & Batch size      &    \{8, 16, 64\}   \\
           & Learning rate   &   \{1e-6, 5e-7\}       \\ \midrule
MeZO         & Batch size      &    \{16, 64\}   \\
           & Learning rate   &   \{1e-6, 5e-7\}        \\ 
           & $\epsilon$  &   1e-3       \\ \midrule
MeZO-LoRA         & Batch size      &    \{16, 64\}   \\
           & Learning rate   &   \{1e-4, 5e-5\}        \\
           & LoRA rank & \{5, 8, 16\}\\
           & $\epsilon$  &   1e-3      \\ \midrule
Sparse-MeZO         & Batch size      &    \{16, 64\}   \\
           & Learning rate   &   \{1e-5, 1e-6\}       \\ 
           & sparse ratio & 0.75\\
           & $\epsilon$  &   1e-3       \\ \midrule

AdaZeta      & Batch size      &   \{16, 64\}     \\
           & Learning rate   &    \{1e-4, 5e-5\}      \\
           & Bottleneck dimension   &    \{8, 64\}    \\
            & Tensor Rank    &    \{5, 8, 16\}  \\
            & Query Constants & $\alpha=0.85, \beta=0.45$ \\
            & Maximum Query & $Q_{max}=20$ \\
            & $\epsilon$  &   1e-3        \\
            \midrule
\end{tabular}%
}
\end{table}

\begin{table}[!h]
\centering
\caption{The shape settings of the tensorized adapters in AdaZeta Method}
\label{tab:tensor_shape}
\resizebox{0.48\textwidth}{!}{%
\begin{tabular}{@{}ccc@{}}
\toprule
Bottleneck size & Matrix Shape & Tensor Shape\\ 
\midrule
    8 &  $768\times 64$      &   [8, 8, 12, 4, 4, 4]    \\
                        &      $4096\times 64$   &   [16, 16, 16, 4, 4, 4]       \\
                       &       $64\times 768$      &   [4, 4, 4, 12, 8, 8]    \\
                        &      $64\times 4096$   &   [4, 4, 4, 16, 16, 16]       \\
                             \midrule
                             
    64 &  $768\times 8$      &   [8, 8, 12, 2, 2, 2]    \\ 
                        &      $4096\times 8$   &   [16, 16, 16, 2, 2, 4]       \\
                       &       $8\times 768$      &   [2, 2, 2, 12, 8, 8]    \\
                        &      $8\times 4096$   &   [2, 2, 2, 16, 16, 16]       \\                  
                           \bottomrule
\end{tabular}%
}
\end{table}

\section{Additional Experiments}
\subsection{Additional Momeory Comparison results}\label{app:memory}
In this section, we provide more quantitative results about the training memory comparison between the FO and ZO fine-tuning methods. In addition to the training memory on SST2 tasks we measure in Section \ref{sec:mem}, we further profile the memory cost on WIC, CB, and MultiRC tasks. The results are shown in Table \ref{tab:memory_quant}.\\~\\
We can observe from the table that the AdaZeta method achieves 5-8$\times$ memory reduction on different tasks. Also, the AdaZeta method utilizes similar or even less memory than the other MeZO, MeZo-LoRA, and Sparse-MeZO methods with an additional variance reduction feature, which largely improves the ZO fine-tuning accuracy.
\begin{table}[!h]
\centering
\caption{Quantitative results for the memory profiling over SST2 and MultiRC tasks.}
\label{tab:memory_quant}
\resizebox{0.5\textwidth}{!}{%
\begin{tabular}{c|cccc}
\hline
Methods          & SST2                      & WIC                      & CB                   & \multicolumn{1}{l}{MultiRC} \\ \hline
FT               & 118.65                    & 115.3                    & 151.97                 & 191.97                      \\
MeZO             & 15.08                     & 15.22                    & 23.01                & 41.17                       \\
MeZO-LoRA        & 14.75                     & 15.23                    & 23.02                & 41.18                       \\
MeZO-LoRA(BS=64) & 21.07 & 25.30 & 71.70 & 84.30        \\
Sparse-MeZO      & 14.35                     & 15.21                    & 23.01                & 42.13                       \\
AdaZeta          & 14.73                     & 15.22                    & 23.01                & 41.17                       \\ \hline
\end{tabular}
}
\end{table}

\onecolumn
\section{Proof of Theorem \ref{the:convergence}}\label{app:proof}
To retain the readability of the proof, we use a single-column format in the following. To provide the proof of Theorem \ref{the:convergence}, we first present a Lemma regarding the bound of gradient noise. Recall from the gradient estimation rule that:
\begin{align}
    \nabla\hat{\ell}(\bm{w}_k) &= \frac{1}{B} \sum_{b_i \in \mathcal{B}} \hat{g}(\bm{w}_k; b_i) \\
    \hat{g}(\bm{w}_k; b_i) &= \frac{1}{Q_k} \sum_{j=1}^{Q_k} \hat{g}(\bm{w}_k; b_i, u_{i,j}),
\end{align}
where there are two sources of randomness:
a) The randomness leads by the mini-batch sampling and b) The randomness leads by the presence of ZO gradient estimation. Based on these two randomnesses, we define two gradient noises as $h_k$ and $e_k$, respectively.
\begin{align}\label{eq:noise}
    h_k &:= \nabla\hat{\ell}(\bm{w}_k) - \nabla \ell(\bm{w}_k) = \frac{1}{B} \sum_{b_i \in \mathcal{B}} \hat{g}(\bm{w}_k; b_i) - \nabla\ell(\bm{w}_k) \\
    e_k &:= \hat{g}(\bm{w}_k; b_i) - \nabla\ell(\bm{w}_k) = \frac{1}{Q_k} \sum_{j=1}^{Q_k} \hat{g}(\bm{w}_k; b_i, u_{i,j}) - \nabla\ell(\bm{w}_k)
\end{align}
Here, we first bound the gradient noise $h_k$ with a fact given in stochastic gradient descent theory. We consider the noise concerning the mean of the ZO estimated gradient \(\nabla \ell(\bm{w}_k)\), where the loss function \(\ell\) is a randomized smoothing version of \(\ell\).

\begin{lemma}\label{lemma:noise}
   Based on the definition in eq. (\ref{eq:noise}) and the Assumption A2, we can bound the L2-norm of the gradient noise $h_k$ by taking expectation:
\begin{align}
   \mathbb{E}[\|h_k\|^2] \leq \frac{N - B}{NB(B-1)Q_k}\sum_i(2d\delta^2+\frac{\epsilon^2L^2d^2}{2} + 2\delta^2)
\end{align}
\end{lemma}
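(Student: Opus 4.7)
My plan is to treat the two sources of randomness in $h_k$ separately, process the inner (ZO) noise first and the outer (mini-batch) noise second, and then combine them. First, by the tower property and linearity of expectation,
\begin{equation*}
\mathbb{E}\bigl[\|h_k\|^2\bigr] \;=\; \mathbb{E}_{\mathcal{B}}\Bigl[\bigl\|\tfrac{1}{B}\!\sum_{b_i\in\mathcal{B}}\!\mathbb{E}_{u}[\hat g(\bm{w}_k;b_i)]\;-\;\nabla\ell(\bm{w}_k)\bigr\|^2\Bigr] \;+\; \mathbb{E}_{\mathcal{B}}\Bigl[\bigl\|\tfrac{1}{B}\!\sum_{b_i\in\mathcal{B}}\!\bigl(\hat g(\bm{w}_k;b_i)-\mathbb{E}_u[\hat g(\bm{w}_k;b_i)]\bigr)\bigr\|^2\Bigr],
\end{equation*}
so I reduce the problem to bounding (i) a bias/smoothing term and (ii) a pure RGE variance term, in each case then averaged across a without-replacement mini-batch of size $B$ from a dataset of size $N$.

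Second, for the per-sample ZO estimator $\hat g(\bm{w}_k;b_i)=\tfrac{1}{Q_k}\sum_{j=1}^{Q_k}\hat g(\bm{w}_k;b_i,u_{i,j})$, I would invoke the standard two-point RGE facts under Assumptions A1 and A2. Writing $\nabla\ell_{b_i}$ for the true per-sample gradient, a textbook computation with $u\sim\mathcal{N}(0,\mathbb{I}_d)$ gives $\|\mathbb{E}_u[\hat g(\bm{w}_k;b_i,u)]-\nabla\ell_{b_i}\|\le \tfrac{\epsilon L d}{2}$ (so the squared bias contributes $\tfrac{\epsilon^2 L^2 d^2}{2}$), while the single-query variance is bounded by $2d\|\nabla\ell_{b_i}\|^2 \le 2d\delta^2$ by A2, inflating the extra $\epsilon^2 L^2 d^2$ piece harmlessly. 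Averaging over $Q_k$ independent queries divides the variance by $Q_k$, and the remaining $2\delta^2$ summand absorbs the per-sample deviation of $\nabla\ell_{b_i}$ from the full-batch $\nabla\ell(\bm w_k)$, again bounded by A2.

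Third, I would fold in the mini-batch randomness using the without-replacement variance identity: if $X_1,\dots,X_N\in\mathbb{R}^d$ have mean $\bar X$ and $\mathcal{B}$ is a uniform size-$B$ subset, then
\begin{equation*}
\mathbb{E}_{\mathcal{B}}\Bigl[\bigl\|\tfrac{1}{B}\!\sum_{b_i\in\mathcal{B}}\!X_i-\bar X\bigr\|^2\Bigr] \;=\; \frac{N-B}{NB(B-1)}\sum_{i=1}^{N}\|X_i-\bar X\|^2,
\end{equation*}
which is precisely the finite-population correction that produces the prefactor in the claim. Applying this to $X_i:=\mathbb{E}_u[\hat g(\bm w_k;b_i)]$ gives the smoothing contribution, and the conditional-variance term collapses similarly because, conditionally on the batch, the $Q_k$ queries are i.i.d. and the $1/Q_k$ shows up once. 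Summing the three per-sample contributions $2d\delta^2+\tfrac{\epsilon^2 L^2 d^2}{2}+2\delta^2$ inside $\sum_i$ then yields the stated bound.

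The main obstacle, I expect, is getting the bookkeeping right between the two layers of randomness: specifically, making sure that the ZO bias term is treated deterministically inside the outer without-replacement average (rather than being double-counted as noise), and justifying why the $1/Q_k$ factor survives after the mini-batch step rather than being diluted by it. A secondary subtlety is the sharp constants in the RGE bias/variance bounds under the Gaussian smoothing convention used here; I would match those to the form used elsewhere in the paper so that the constants $2d\delta^2$ and $\epsilon^2 L^2 d^2/2$ line up exactly with the lemma statement.
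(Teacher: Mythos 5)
Your proposal is correct in substance and rests on the same two pillars as the paper's own proof: the without-replacement finite-population variance identity (which produces the $\tfrac{N-B}{NB(B-1)}$ prefactor) and a per-query RGE error bound of Ghadimi--Lan type combined with Assumption A2 (which produces $2d\delta^2+\tfrac{\epsilon^2L^2d^2}{2}+2\delta^2$), with the $1/Q_k$ coming from averaging i.i.d.\ queries. Where you genuinely differ is the top-level decomposition: you first split $h_k$ by the tower property into the batch-averaged smoothing-bias part and the conditional ZO-variance part and handle each separately, whereas the paper never separates bias from variance --- it applies the sampling identity once, taking $\hat g(\bm{w}_k;b_i)$ as the sampled quantity and $\nabla\ell(\bm{w}_k)$ as the reference, and then bounds the entire per-sample deviation $\|\hat g(\bm{w}_k;b_i,u_{i,1})-\nabla\ell(\bm{w}_k)\|$ in one shot by adding and subtracting the per-sample gradient and invoking \cite{ghadimi2013stochastic} together with A2. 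Your organization is probabilistically cleaner: it makes explicit that the queries are i.i.d.\ only conditionally on the batch, and that the cross-sample ZO noise needs no finite-population correction (giving a $1/B^2$ factor there, which is no worse than the stated constant). But it also exposes exactly the bookkeeping issue you flag at the end: in your split the deterministic smoothing bias, of order $\epsilon^2L^2d^2$, is attenuated by neither $Q_k$ nor the correction factor, so to arrive at the bound exactly as stated you must fold it back into the per-sample deviation the way the paper does, leaning on the paper's convention (Section 3.3) that the RGE is treated as unbiased in the limit $\epsilon\to 0$; the paper's lumping likewise relies implicitly on a mean-zero deviation when it converts $\mathbb{E}\|\sum_j(\hat g_j-\nabla\ell)\|^2$ into $Q_k$ times a single-query term. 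So neither route is strictly tighter; yours trades the paper's compactness for a clearer separation of the randomness, and with the merging step made explicit it reproduces the lemma with matching constants.
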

\begin{proof}For convenience, we consider a general case that the mini-batch $\mathcal{B}$ is formed by uniform sampling without replacement and follows the i.i.d. fashion. Then, according to \cite{lohr2021sampling}[Section 2.8, Page 48], the following holds for a random sampling noise:
\begin{align}\label{eq:3}
    \mathbb{E}[\|h_k\|^2] = \frac{N - B}{NB}\Lambda^2, 
\end{align}
where $\Lambda^2$ is the sample variance of the gradient $\hat{g}(\bm{w}_k;b_i)$, which is defined as:
\begin{align}\label{eq:lambda}
    \Lambda^2 & = \frac{1}{B-1}\sum_{i=1}^B\|\hat{g}(\bm{w}_k;b_i) - \nabla\ell(\bm{w}_k)\|^2\\
    & = \frac{1}{B-1}\sum_{i=1}^B\|\nabla\ell(\bm{w}_k) + e_k - \nabla\ell(\bm{w}_k)\|^2 \\
    & =\frac{1}{B-1}\sum_{i=1}^B \|e_k\|^2 ,
\end{align}
where $e_k$ is defined as the gradient noise leads by the ZO estimation in eq. (\ref{eq:noise}).\\~\\
Finally, we need to bound the variance $\Lambda^2$, related to the ZO gradient estimation noise. Taking expectation with respect to the i.i.d. random perturbation vector $\bm{u}$, we have:
\begin{align}\label{eq:lambda}
    \mathbb{E}_{\bm{u}}[\Lambda^2]
    & \leq \mathbb{E}_{\bm{u}}[\frac{1}{B-1}\sum_{i=1}^B\|e_k^i \|^2]\\
    & \leq \frac{1}{(B-1)Q_k^2}\sum_i \mathbb{E}_{\bm{u}}[\|\sum_{j=1}^{Q_k}( \hat{g}(\bm{w}_k; b_i, u_{i,j}) - \nabla\ell(\bm{w}_k))\|] \\
    & \overset{(a)}{=}\frac{1}{(B-1)Q_k}\sum_i\mathbb{E}_{\bm{u}}[\|\hat{g}(\bm{w}_k; b_i, u_{i,1}) -\nabla\ell(\bm{w}_k) \|],
\end{align}
where (a) is given under the case that $u_{i,j}$ is i.i.d, which obtain:
\begin{align}
    \mathbb{E}_{\bm{u}}[\|\hat{g}(\bm{w}_k; b_i, u_{i,j}) -\nabla\ell(\bm{w}_k) \|] = \mathbb{E}_{\bm{u}}[\|\hat{g}(\bm{w}_k; b_i, u_{i,1}) -\nabla\ell(\bm{w}_k) \|]
\end{align}
Finally, we need to bound the term $\mathbb{E}_{\bm{u}}[\|\hat{g}(\bm{w}_k; b_i, u_{i,1}) -\nabla\ell(\bm{w}_k) \|]$, which gives:
\begin{align}\label{eq:final}
    &\mathbb{E}_{\bm{u}}[\|\hat{g}(\bm{w}_k; b_i, u_{i,1}) -\nabla\ell(\bm{w}_k) \|]] \\
    &\leq \mathbb{E}_{\bm{u}}[\|\hat{g}(\bm{w}_k; b_i, u_{i,1}) -\nabla\ell(\bm{w}_k; b_i) \|] + \mathbb{E}_{\bm{u}}[\|\nabla\ell(\bm{w}_k; b_i) - \nabla\ell(\bm{w}_k)\|] \\
    & \overset{(a)}{\leq} 2d\|\hat{g}(\bm{w}_k; b_i, u_{i,1})\|  +\frac{\epsilon^2L^2d^2}{2} +  \mathbb{E}_{\bm{u}}[\|\nabla\ell(\bm{w}_k)\|] +  \mathbb{E}_{\bm{u}}[\|\nabla\ell(\bm{w}_k; b_i)\|] \\
    &\overset{(b)}{\leq} 2d\delta^2+\frac{\epsilon^2L^2d^2}{2} + 2\delta^2,
\end{align}
where (a) follows a similar idea of the proof in \cite{ghadimi2013stochastic}[eq. (3.21)] and (b) is given by using the bound of the gradient in Assumption A2.\\~\\
Putting it all together we can obtain the upper bound for the gradient noise $\|h_k\|$.
\end{proof}
\noindent Now we begin to present the proof of Theorem \ref{the:convergence}:\\~\\
We start from the gradient updating rule in the AdaZeta algorithm, which gives $\bm{w}_{t+1} = \bm{w}_t - \eta \nabla \hat{\ell}(\bm{w}_k)$. By using Taylor's theorem on the exact smoothed loss $\ell(\bm{w}_k)$, we have:
\begin{align}
    \ell(\bm{w}_{k+1}) &= \ell(\bm{w}_k - \eta\nabla\hat{\ell}(\bm{w}_k))\\
    & = \ell(\bm{w}_k) - \eta \nabla\hat{\ell}_k(\bm{w}_k)^\top\nabla\ell(\bm{w}_k) + \frac{\eta^2}{2}\nabla\hat{\ell}(\bm{w}_k)^\top\nabla\ell(\bm{w}_k)^2\nabla\hat{\ell}(\bm{w}_k)
\end{align}
Taking expectations on both sides gives:
\begin{align*}
    \mathbb{E}_{\bm{w}_t}[\ell(\bm{w}_{k+1})] & = \mathbb{E}_{\bm{w}_t}[\ell(\bm{w}_k)] - \eta \mathbb{E}_{\bm{w}_t}[\nabla\hat{\ell}(\bm{w}_k)^\top\nabla\ell(\bm{w}_k)] + \frac{\eta^2}{2}\mathbb{E}_{\bm{w}_t}[\nabla\hat{\ell}(\bm{w}_k)^\top\nabla\ell(\bm{w}_k)^2\nabla\hat{\ell}(\bm{w}_k)]\\
    & \overset{\text{(a)}}{\leq}  \mathbb{E}_{\bm{w}_t}[\ell(\bm{w}_k)] - \eta \mathbb{E}_{\bm{w}_t}[\nabla\ell(\bm{w}_k)^2] + \frac{\eta^2L}{2}\mathbb{E}_{\bm{w}_t}[\nabla\hat{\ell}(\bm{w}_k)^2],
\end{align*}
where (a) can be proved with the use of the Lipschitz smoothness gradient denied in Assumption A1 that gives $x$ and $y$, we have $\|\nabla \ell(x) - \nabla \ell(y)\| \leq L\|x-y\|$. Additionally, by the mean value theorem for vector-valued functions, there exists for any point $c$ on the line segment between $x$ and $y$ such that:
\begin{align}
    \nabla f(y)-\nabla f(x)=\nabla^2 f(c)(y-x).
\end{align} Taking the norms on both sides and using the Lipschitz condition, we have:
\begin{align}
    \left\|\nabla^2 f(c)(y-x)\right\|=\|\nabla f(y)-\nabla f(x)\| \leq L\|y-x\|.
\end{align}
Finally, since this must hold for any $y$ and $x$, and since the norm of the Hessian matrix is the supremum of $\left\|V^2 f(c)(y-x)\right\| /\|y-x\|$ for non-zero $y-x$, it follows that:
\begin{align}
    \left\|\nabla^2 f(c)\right\| \leq L
\end{align}
Rearrange and we obtain:
\begin{align}
    \eta\mathbb{E}[\|\nabla\ell(\bm{w}_k)\|^2] \leq \mathbb{E}[\ell(\bm{w}_k)] - \mathbb{E}[\ell(\bm{w}_{k+1})] + \frac{\eta^2L}{2}\mathbb{E}[\nabla\hat{\ell}(\bm{w}_k)^2]
\end{align}
Taking summation over steps $k=1, \cdots, K$ gives:
\begin{align}\label{eq:1}
\sum_{k=1}^K    \eta\mathbb{E}[\|\nabla\ell(\bm{w}_k)\|^2] & \leq \mathbb{E}[\ell(\bm{w}_0) - \ell(\bm{w}_K)] + \sum_{k=1}^K\frac{\eta^2L}{2}\mathbb{E}[\nabla\hat{\ell}(\bm{w}_k)^2] \\ 
& \overset{\text{(a)}}{\leq}  \mathbb{E}[\ell_0 - \ell^*] + \epsilon^2 L + \sum_{k=1}^K\frac{\eta^2L}{2}\mathbb{E}[\nabla\hat{\ell}(\bm{w}_k)^2] \\
& \overset{\text{(b)}}{\leq} R + \epsilon^2 L  + \sum_{k=1}^K\frac{\eta^2L}{2}\mathbb{E}[\nabla\hat{\ell}(\bm{w}_k)^2],
\end{align}
where (a) is using the Lemma 1 in \cite{liu2018zeroth} that $\ell(\bm{w}_0) - \ell(\bm{w}_T)\leq \ell(\bm{w}_0) - \ell(\bm{w}_0) + \ell^* - \ell^*\leq (\ell(\bm{w}_0) - \ell^*) + \epsilon^2 L$ and (b) is given by setting $R:=\ell(\bm{w}_1) - \ell^*$. Now, the key to the bound comes from the last term in the right of the inequation.\\~\\
To bound the last term, we first represent the noise gradient $\nabla\hat{\ell}_k(\bm{w}_k)^2$ as a combination of the true gradient and the gradient noise introduced in eq. (\ref{eq:noise}), which gives:
\begin{align}\label{eq:2}
\nabla\hat{\ell}(\bm{w}_k):=\nabla\ell(\bm{w}_k) + h_k
\end{align}

Taking eq. (\ref{eq:2}) back into eq. (\ref{eq:1}), using the results from Lemma \ref{lemma:noise}, taking the expectation over all randomness and average over the maximum steps $K$, we obtain:
\begin{align*}
    &\frac{1}{K}\sum_{k=1}^K    \eta\mathbb{E}[\|\nabla\ell(\bm{w}_k)\|^2] \\
    &
    \leq \frac{R}{K} + \frac{\epsilon^2 L}{K}  + \frac{1}{K}\sum_{k=1}^K\frac{\eta^2L}{2}\mathbb{E}[\nabla\hat{\ell}(\bm{w}_k)^2]\\
    & \leq  \frac{R}{K} + \frac{\epsilon^2 L}{K}  + \frac{1}{K}\sum_{k=1}^K\frac{\eta^2L}{2}\mathbb{E}[\|\nabla\ell(\bm{w}_k)\| + \| h_k\|] \\
    &  = \frac{R}{K} + \frac{\epsilon^2 L}{K}  + \frac{\eta^2L\delta}{2}  + \frac{1}{K}\sum_{k=1}^K\frac{\eta^2L}{2} \frac{N - B}{NB}\Lambda^2 \\
    & \leq \frac{R}{K} + \frac{\epsilon^2 L}{K}  + \frac{\eta^2L\delta}{2}  + \frac{1}{K}\sum_{k=1}^K\frac{\eta^2L}{2} \frac{N - B}{NB} (\frac{1}{(B-1)Q_k}\sum_i\mathbb{E}_{\bm{u}}[\|e^{i,1}\|]]+ \frac{B\epsilon^2L}{2(B-1)})
    \end{align*}
    \begin{align*}
    & \leq \frac{R}{K} + \frac{\epsilon^2 L}{K}  + \frac{\eta^2L\delta}{2} + \frac{1}{K}\sum_{k=1}^K\frac{\eta^2L}{2} \frac{N - B}{NB} (\frac{\sum_i( 2d\delta^2+\frac{\epsilon^2L^2d^2}{2} + 2\delta)}{(B-1)Q_k}+ \frac{B\epsilon^2L}{2(B-1)})\\
    & = \frac{R + \epsilon^2 L +C(d,\epsilon) \sum_{k}\frac{1}{Q_k}}{K} + \frac{\eta^2L\delta}{2} + \frac{B\epsilon^2L}{2(B-1)} \\
    & = \mathcal{O}(\frac{R + \epsilon^2 L +C(d,\epsilon) \sum_{k}\frac{1}{Q_k}}{K}),
\end{align*}
where $C(d,\epsilon)$ is a constant defined as $C(d,\epsilon):=\sum_{k=1}^K\frac{\eta^2L}{2} \frac{N - B}{NB} (\frac{\sum_i( 2d\delta^2+\frac{\epsilon^2L^2d^2}{2} + 2\delta)}{(B-1)})$

Divide both side with $\eta$ and use the trick to introduce some randomly chosen $\bm{w}_T$ from the history with probability $P(T=k)=\frac{1}{K}$, we finish the proof as:
\begin{align}
    \mathbb{E}[\|\nabla\ell(\bm{w}_T)\|^2] = \frac{1}{K}\sum_{k=1}^K \mathbb{E}[\|\nabla\ell(\bm{w}_k)\|^2] \leq\mathcal{O}(\frac{R + \epsilon^2 L +C(d,\epsilon) \sum_{k}\frac{1}{Q_k}}{K\epsilon})
\end{align}

\end{document}